\definecolor{cpar}{rgb}{0.2,0.7,0.7}
\definecolor{cseq}{RGB}{247,149,72}
\newcommand{\N}{\mathcal{N}}
\newcommand{\G}{\mathcal{G}}
\newcommand{\SA}{\mathcal{S}}
\newcommand{\PA}{\mathcal{P}}
\newcommand{\A}{\mathcal{A}}
\theoremstyle{definition}
\newtheorem{definition}{Definition}
\newtheorem{corollary}{Corollary}
\newtheorem{assumption}{Assumption}
\theoremstyle{remark}
\title[Parallel Algorithms Align with Neural Execution]{Parallel Algorithms Align with Neural Execution}
\author[V. Engelmayer et al.]{%
Valerie Engelmayer\thanks{Corresponding author.}\\
University of Augsburg\\
\email{valerie.engelmayer@gmail.com}\And
Dobrik Georgiev\\ 
University of Cambridge\\
\email{dgg30@cam.ac.uk}\\
\And
Petar Veličković\\ 
Google DeepMind\\
\email{petarv@google.com}\\
}
\begin{document}

\maketitle

\begin{abstract}

Neural algorithmic reasoners are parallel processors. Teaching them sequential algorithms contradicts this nature, rendering a significant share of their computations redundant. Parallel algorithms however may exploit their full computational power, therefore requiring fewer layers to be executed. This drastically reduces training times, as we observe when comparing parallel implementations of searching, sorting and finding strongly connected components to their sequential counterparts on the CLRS framework. Additionally, parallel versions achieve (often strongly) superior predictive performance.
\end{abstract}

\section{Motivation}

Classical algorithms often pose a bottleneck to information processing \cite{velickovic_neural_2021}. They are usually designed to deal with consistent, totally ordered, abstract quantities, while in reality, we need to reason about noisy, high-dimensional data. Machine learning and neural networks (NNs) in particular enable machines to extract useful features from such inputs, but if their outputs need to be composed with a non-differentiable algorithm, they cannot learn from direct feedback via backpropagation. Moreover, compressing information in a way that makes the algorithm applicable loses a lot of potentially relevant detail. Breaking this bottleneck by teaching NNs how to execute algorithms is the objective of \emph{neural algorithmic reasoning} \cite{yan_neural_2020, velickovic_neural_2021, velickovic_clrs_2022}. First applications to real-world data are promising \cite{vrvcek2020step, velickovic_reasoning-modulated_2022, georgievnarti}, but extrapolation still has room for improvement even on highly elaborate architectures \cite{ibarz_generalist_2022, bevilacqua_neural_2023}. Therefore, there is a clear need to investigate neural networks' information processing capabilities more closely.

When executing algorithms, NNs act as computational machines. In graph neural networks (GNNs), graph nodes take on the role of storage space (interpreting edge labels as nodes adjacent to its endpoints throughout this paper), while edges indicate which ways information may flow. The update function of choice defines the set of constant (neural) time operations. But note how nodes update their features \emph{in parallel}, each one acting as a processor of its own rather than sheer memory.

The parallel nature of neural networks is widely known. Running them in parallel fashion on processing devices like GPUs and TPUs drastically saves computational resources \cite{zhang2019recent, yazdanbakhsh2021evaluation}. It seems natural that this translation between computational models would also hold the other way around. And indeed, \citet{loukas_what_2020} proves how GNNs are analogous to distributed computational models under certain assumptions. 
\citet{kaiser2015neural} exploit the advantages of parallel processing in their Neural GPU. The differentiable sorting algorithms by \citet{petersen_differentiable_2021} operate in parallel. \citet{freivalds_neural_nodate} derive their architecture from the parallel computational model of Shuffle-Exchange Networks. \citet{xu_what_2020} observe how their model learns to compute a shortest path starting from both ends in parallel when executing the Bellman-Ford algorithm. \citet{velickovic_clrs_2022} and \citet{velickovic_neural_2020} hint at the favourability of using parallelized computations whenever possible.

It is time the parallel processing capabilities of NNs are exploited systematically, and this paper takes a relevant step in that direction. Theory on parallel computational models and algorithms explicitly designed for them are abundant \cite{gibbons_efficient_1990, greenlaw_limits_1995, parhami2006introduction}. Their trajectories are shorter and align more closely with neural architectures, as illustrated in Figure \ref{fig:highlevel}. Hinting at these during training teaches NN to execute algorithmic tasks much more efficiently than when providing hints for sequential algorithms, as we demonstrate in Section \ref{sec:exp} for the examples of searching, sorting and finding strongly connected components. While it is common practice to modify the neural architecture for better alignment \cite{xu_what_2020, velickovic_pointer_2020, georgiev_algorithmic_2022, ibarz_generalist_2022, mahdavi_towards_2023}, it seems promising to narrow the gap from the other side, by choosing algorithms that naturally align with neural execution.
\begin{figure}
\centering
\subfigure[Neural processing is highly parallel.]{
\begin{tikzpicture}[scale=0.5]
\node at (-2,2){};
\node at (8,2){};

\foreach \y in {0,...,4}
	\foreach \x  in {0,2,4,6}
		{
		\node (\x\y) [draw,circle,fill=gray!50,outer sep=1pt] at (\x,\y) {};
		}
\foreach \y [count = \yi] in {0,...,4}{
		\draw [->] (0\y) -- (2\y);
		\draw [->] (2\y) -- (4\y);
		\draw [->] (4\y) -- (6\y);
		}
\foreach \y [count = \yi] in {0,...,3}{
		\draw [->] (0\y) -- (2\yi);
		\draw [->] (2\y) -- (4\yi);
		\draw [->] (4\y) -- (6\yi);
		\draw [->] (0\yi) -- (2\y);
		\draw [->] (2\yi) -- (4\y);
		\draw [->] (4\yi) -- (6\y);
		}		
\end{tikzpicture}
}\vfill
\subfigure[Trajectories of sequential algorithms are sparse.]{
\begin{tikzpicture}[scale =0.4]

\foreach \y [count = \yi] in {0,...,4}
	\foreach \x [count = \xi] in {0,3,6,9,12,15,18}
		{
		\draw [fill=gray!50] (\x,\y) rectangle (\x+1,\yi);
		}
		\draw [fill=cseq] (0,2) rectangle (1,3);
		\draw [fill=cseq] (0,0) rectangle (1,1);
		
		\draw [fill=cseq] (3,3) rectangle (4,4);
		\draw [fill=cseq] (3,1) rectangle (4,2);
		
		\draw [fill=cseq] (6,3) rectangle (7,4);
		\draw [fill=cseq] (6,2) rectangle (7,3);
		
		\draw [fill=cseq] (9,2) rectangle (10,3);
		\draw [fill=cseq] (9,1) rectangle (10,2);
		
		\draw [fill=cseq] (12,3) rectangle (13,4);
		\draw [fill=cseq] (12,1) rectangle (13,2);
		
		\draw [fill=cseq] (15,4) rectangle (16,5);
		\draw [fill=cseq] (15,0) rectangle (16,1);
		
		\draw [fill=cseq] (18,2) rectangle (19,3);
		
\draw [->,cseq,thick] (1.1,2.5)--(2.9,1.5);
\draw [->,cseq,thick] (1.1,0.5)--(2.9,1.3);
\draw [->,cseq,thick] (4.1,1.5)--(5.9,3.3);
\draw [->,cseq,thick] (4.1,3.5)--(5.9,3.5);
\draw [->,cseq,thick] (7.1,3.5)--(8.9,2.7);
\draw [->,cseq,thick] (7.1,2.5)--(8.9,2.5);

\draw [->,cseq,thick] (10.1,2.5)--(11.9,1.7);
\draw [->,cseq,thick] (10.1,1.5)--(11.9,1.5);
\draw [->,cseq,thick] (13.1,1.5)--(14.9,4.4);
\draw [->,cseq,thick] (13.1,3.5)--(14.9,4.6);
\draw [->,cseq,thick] (16.1,4.5)--(17.9,2.6);
\draw [->,cseq,thick] (16.1,0.5)--(17.9,2.4);
\end{tikzpicture}
}\hfill
\subfigure[Trajectories of parallel algorithms are denser and shorter.]{
\begin{tikzpicture}[scale =0.4]
\node at (-2,0){}; 
\node at (11,0){}; 

\foreach \y [count = \yi] in {0,...,4}
	\foreach \x [count = \xi] in {0,3,6,9}
		{
		\draw [fill=cpar, rounded corners=0.2ex] (\x,\y) rectangle (\x+0.8,\y + 0.8);
		}
		
\foreach \y in {0,...,4}
	\foreach \x in {0,3,6}
		{
		\draw [->,cpar, thick] (\x+0.9,\y+0.4)--(\x+2.9,\y+0.4);
		}
		
\foreach \y in {0,...,3}
	\foreach \x in {0,6}
		{
		\draw [->,cpar, thick] (\x+0.9,\y+0.4)--(\x+2.9,\y+1.2);
		}
		
\foreach \y in {1,...,4}
	\foreach \x in {3}
		{
		\draw [->,cpar, thick] (\x+0.9,\y+0.4)--(\x+2.9,\y-0.4);
		}
\end{tikzpicture}
}

\caption{Trajectories of sequential and parallel algorithms, as well as neural processing.}
\label{fig:highlevel}

\end{figure}
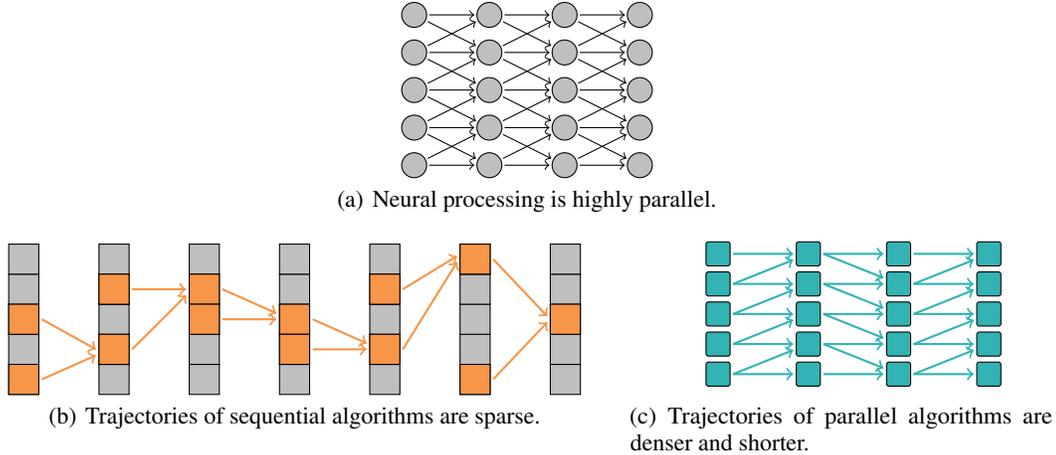

\section{Parallel Computing}

Fundamentally, the parallel computational models addressed here assume multiple processors collaborating to solve a task. The line between parallel and distributed computing is blurry and depends on how controlled the interactions between processors are. We assume a fixed and known interconnection graph, uniquely identified processors and a common clock to govern computation. Therefore, we choose to speak of parallel computing.

\subsection{Parallel Computational Models}

\paragraph{Processor Arrays.} Communication may take place via hard-wired channels between the processors. These induce an interconnection graph that may in principle take any shape. At every time step, each processor executes some computation based on the contents of its local memory and the information received from its neighbours in the previous step, and may, in turn, send out a tailored message through any of its channels.

\paragraph{PRAM Models.} Alternatively, communication may be realised by reading from and writing to global memory, giving rise to \emph{PRAM} (parallel random access machine) models \cite{gibbons_efficient_1990}. Submodels allowing for \emph{concurrent} reading and writing by multiple processors are referred to as \emph{CRCW} PRAM. Different conventions exist on whether attempting to concurrently write different values is permitted, and if so, how to decide who succeeds. In the most powerful model, the \emph{priority} CRCW PRAM, the value from the processor with the lowest index taking part in the concurrent write will be taken on. 

\subsection{Efficiency}
\label{sec:eff}
Since multiple steps can be carried out at the same time, the required number of operations in a parallel algorithm does not impose a lower bound to its run time as in the sequential case, but the product of time and processor number. \emph{Optimal speedup} is achieved if the use of $n$ processors speeds up computation by a factor of $n$.
This gives rise to a notion of efficiency frequently used in parallel computing \cite{gibbons_efficient_1990}.
\begin{definition}
    The \emph{efficiency} of a parallel algorithm solving a task of sequential complexity $C$ on $p$ processors in time $t$
    is defined as
    \begin{align*}
        \frac{C}{pt}.
    \end{align*}
\end{definition}
It is not hard to see that optimal speedup entails an efficiency of $\Omega(1)$.

\subsection{Examples of Parallel Algorithms}
\paragraph{Searching.}
\label{sec:algo}
For a simple parallel search for value $x$ in a descending list of $n$ items, assume a priority CRCW PRAM with $n$ processors. Distribute the first item to processor 1, the second to processor 2 etc., while $x$ is stored in the global memory. If a processor's item is $\geq x$, it tries to write its index to a designated location in the global memory. Since the one with the smallest index will succeed, the location now contains the desired position of $x$.
The run time is independent of the input size\footnote{Distributing values to processors can be done in constant time by routing over the shared memory. We neglect distributing/returning in-/outputs from/to a host computer in the following as it is omitted in neural execution.}, so the time-processor-product is $\Theta(n)$, missing optimal speed-up as sequential searching can be done in $O(\log n)$.

\paragraph{Sorting.} \citet{habermann_parallel_1972} proposes a simple parallel sorting algorithm for a linear array of processors called Odd-Even Transposition Sort (OETS). Each processor holds one item. In an odd (even) round, all neighbouring pairs starting at an odd (even) index swap their items if they are out of order. The two types of rounds take turns for at most $n$ rounds total when $n$ items are to be sorted, yielding $O(n^2)$ operations when accounting for the $n$ processors. Again, this is not optimal for comparison-based sorting, which may be done in $O(n \log n)$.

\paragraph{Strongly Connected Components.}
\citet{rolim_identifying_2000} propose a Divide-and-Conquer algorithm for computing strongly connected components (SCC) of a digraph, which they call DCSC. First, find all descendants and predecessors of an arbitrary node, e.g. by carrying out a breadth-first search (BFS) in the graph and its reversed version. The intersection of both sets constitutes a SCC. Observe how each further SCC has to be completely contained in either the descendants, the predecessors or the undiscovered nodes, such that the described routine may be called recursively for start nodes in each subset independently until each vertex is assigned to a SCC. They prove an expected serial time complexity of $O(n \log n)$ for graphs on $n$ nodes whose degrees are bounded by a constant. This is not optimal, but parallelization of the two searches per vertex, as well as the recursive calls, may significantly speed up execution. 

\subsection{Analogy to Neural Networks}
 \citet{loukas_what_2020} formally establishes an analogy between models like processor arrays and GNN by identifying processors with graph nodes and communication channels with edges. Therefore, the width of a GNN corresponds to $p$, and its depth to $t$. Loukas coins the term \emph{capacity} for the product of width and depth of a GNN, reflecting the time-processor product of parallel algorithms. The shared memory of a PRAM finds its neural analogue in graph-level features. Since the computation of a graph feature may take into account positional encodings of the nodes, we may assume a priority CRCW PRAM, encompassing all other PRAM models. 

\section{Efficiency of Executing Algorithms Neurally} 
Inspired by the definition of efficiency in parallel computing, we define the efficiency of a neural executioner as follows. 

 \begin{definition}
     Let $\G$ be a GNN with capacity $c(n)$ executing an algorithm $\A$ of sequential complexity $C(n)$. Define its \emph{node efficiency} as 
     \begin{align*}
         \eta (\G, \A) \coloneqq \frac{C(n)}{c(n)}.
     \end{align*}
 \end{definition}

This definition implies an important assumption we make throughout this paper. 
\begin{assumption}
When executing an algorithm on a GNN, one constant-time operation is to be executed per node per layer.
\end{assumption}
This is not entirely unproblematic as discussed in section \ref{sec:diss}, but often expected when providing hints and helps to identify theoretical properties. Under this assumption, node efficiency denotes the share of nodes doing useful computations throughout the layers.
 Since the computational cost of a GNN also scales with the number of messages that are being sent, it is insightful to study the share of edges that transport relevant information as well. 
 
\begin{definition}
    Let $\G$ be a GNN operating over a graph $G=(V,E)$, $m \coloneqq \vert E \vert$, to execute an algorithm $\A$. Then we call an edge $(i,j) \in E$ \emph{active} at layer $t$ for a certain input $x$, if the operation to be executed by node $j$ at time $t$ involves information stored at node $i$ at time $t-1$.
    Let $a(t)$ be the number of active edges at time $t$, and $T$ the total number of time-steps.
    Then define \emph{edge efficiency} as worst case share of active edges when processing inputs $x_n$ of size $n$,
    \begin{align*}
         \epsilon (\G, \A) \coloneqq \underset{x_n}{\min \ } \frac{1}{T} \sum_{t=1}^T \frac{a(t)}{m}.
    \end{align*}
\end{definition}

Note how neural efficiencies are defined relative to the \emph{algorithm} they are executing as opposed to the \emph{task} they solve. This allows for a neural executioner to be efficient in executing an algorithm that is itself not efficient in solving a task.

\subsection{Parallel Algorithms Entail Higher Efficiency}
\label{sec:theory}

Contradicting a GNN's parallel nature by teaching it to execute sequential algorithms artificially impedes the task. Training to solve tasks in parallel instead is more efficient, which may also simplify the function to learn. 

\paragraph{Shorter Trajectories.}
As observed by \citet{loukas_what_2020}, the complexity of an algorithm lower bounds the capacity of a GNN executing it. If the number of processors is one, the depth alone needs to match the complexity, while the width might theoretically be set to one. But in practice, the width has to scale with the input size $n$ to ensure applicability to different $n$. 
Therefore, \emph{training sequential algorithms forces overspending on capacity by a factor of $n$}.

Setting the width to $n$, as is often done to distribute one unit of information over each node, entails $n$ available processors. Making use of them may shorten the trajectory of an algorithm by a factor of up to $n$ in the case of optimal speedup, which allows the capacity to take on its lower bound. 
The capacity of a GNN directly translates to the time needed to train and execute it. Additionally, long roll-outs give rise to an issue \citet{bansal_end--end_2022} refer to as \emph{overthinking}, where many iterations degenerate the behaviour of a recurrent processor.

\paragraph{Less Redundancy.}
Neural efficiencies denote the share of nodes and edges involved in useful computations. Redundant computations not only harm run times but may also interfere with the algorithmic trajectory. Parameterising them correctly to prevent this can complicate the function to learn.
Assuming the redundant nodes (grey in figure \ref{subfig:seq}) need to preserve their information to be processed or put out later, their self-edges should execute an identity, while the additional incoming messages need to be ignored, i.e. mapped to a constant. 
In practice, this will be hard to do, which could entail a temporal variant of oversmoothing, where relevant information gets lost throughout the layers \cite{skipconnections}. \citet{skipconnections} highlight how skip connections help to avoid the issue, \citet{ibarz_generalist_2022} introduce a gating mechanism to leave information unchanged, \citet{bansal_end--end_2022} let their architecture recall the original input. 

So let's explore the efficiency of executing sequential and parallel algorithms.\\

\begin{corollary}
Let $\G$ be a scalable GNN operating over a graph with $n$ nodes and $m$ edges. Further let $\SA$ be a sequential, and $\PA$ an efficient parallel algorithm on $n$ processors, both of complexity $C$. Then executing $\SA$ and $\PA$ on $\G$, respectively, entails efficiencies
\begin{align*}
    & \eta(\G, \SA) = O \bigg(\frac{1}{n}\bigg),\ \ & \epsilon(\G, \SA) = O\bigg( \frac{1}{m}\bigg),\\
    & \eta(\G, \PA) = O(1),\ \ & \epsilon(\G, \PA) = O\bigg(\frac{n}{m}\bigg).
\end{align*}
\end{corollary}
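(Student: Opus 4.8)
The plan is to compute each of the four quantities directly from its definition, feeding in three ingredients: the Loukas correspondence (width $\leftrightarrow p$, depth $\leftrightarrow t$, capacity $\leftrightarrow pt$), the scalability of $\G$ (which forces the width to be $\Theta(n)$), and the standing assumption that each node performs one constant-time operation per layer. I would start with the two node efficiencies, since these only require reading off the capacity $c(n)$. For $\SA$, scalability fixes the width at $\Theta(n)$, while the inherent dependency chain of its $C$ operations forces the depth to be $\Theta(C)$, as only one useful operation can be resolved per layer. Hence $c(n)=\Theta(nC)$ and $\eta(\G,\SA)=C/\Theta(nC)=O(1/n)$. For $\PA$, the width equals the $p=n$ processors, and efficiency ($C/(pt)=\Omega(1)$) together with the work lower bound $pt\geq C$ pins the depth at $t=\Theta(C/n)$; thus $c(n)=n\cdot t=\Theta(C)$ and $\eta(\G,\PA)=C/\Theta(C)=O(1)$.

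For the edge efficiencies I would bound the active-edge count $a(t)$ at each layer. The key observation is that a constant-time operation reads only a constant number of operands, so each node performing a genuine computation at layer $t$ has only $O(1)$ active incoming edges. In $\SA$ exactly one node does useful work per layer, giving $a(t)=O(1)$ uniformly over $t$ and over inputs; substituting into the definition yields $\frac{1}{T}\sum_{t} a(t)/m = O(1/m)$, and since this holds for every input the minimizing input does too, so $\epsilon(\G,\SA)=O(1/m)$. In $\PA$ all $n$ nodes may compute simultaneously, each with $O(1)$ active edges, so $a(t)=O(n)$ and the same averaging gives $\epsilon(\G,\PA)=O(n/m)$.

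The step I expect to need the most care is the active-edge bound, specifically arguing that ``constant-time operation'' entails constant fan-in, so that an active node contributes only $O(1)$ active edges rather than up to its full degree (which may be $\Theta(n)$ in a dense graph). This is precisely where the standing assumption does the real work, and I would want to state it explicitly rather than let it pass implicitly. A secondary point to handle cleanly is that the minimum over inputs in the edge-efficiency definition does not obstruct the upper bounds: because the per-layer estimates on $a(t)$ hold uniformly across the whole input family, they transfer to the worst-case (minimizing) input unchanged, and likewise the $\Theta$ estimates for width and depth must be argued to hold across all input sizes for $\G$ to qualify as scalable.
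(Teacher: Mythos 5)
Your proposal is correct and follows essentially the same route as the paper's own proof: capacity $\Theta(nC)$ for the sequential case versus $\Theta(C)$ for the efficient parallel case (giving the node efficiencies), and the per-layer active-edge bounds $a(t)=O(1)$ versus $a(t)=O(n)$ (giving the edge efficiencies). You are merely more explicit than the paper about two points it leaves implicit --- the constant fan-in of a constant-time operation and the harmlessness of the minimum over inputs --- which is a reasonable elaboration, not a different argument.
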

\begin{proof}
As observed above, the capacity $c$ of a GNN executing a sequential algorithm of complexity $C$ has to be $c \geq nC$, while it may be $c=C$ in the case of optimal speedup. Node efficiencies follow immediately. 
Since one processor can read only so much information, only a constant number of edges can be active at each layer during sequential processing, while up to a multiple of $n$ edges can be active during parallel algorithms. This yields the stated edge efficiencies.
\end{proof}
Therefore, the share of nodes avoiding redundant computation cannot exceed $1/n$ when executing sequential algorithms, whereas it may reach up to 1 for efficient parallel algorithms. At the same time, the number of redundant messages is reduced by a factor of $n$. Removing the artificial bottleneck of a single processor prevents data from having to be stored until the processor gets to it. Allowing nodes to carry out meaningful computation frees them of the dead weight of acting as memory. 

\paragraph{Local Exchange of Information.}
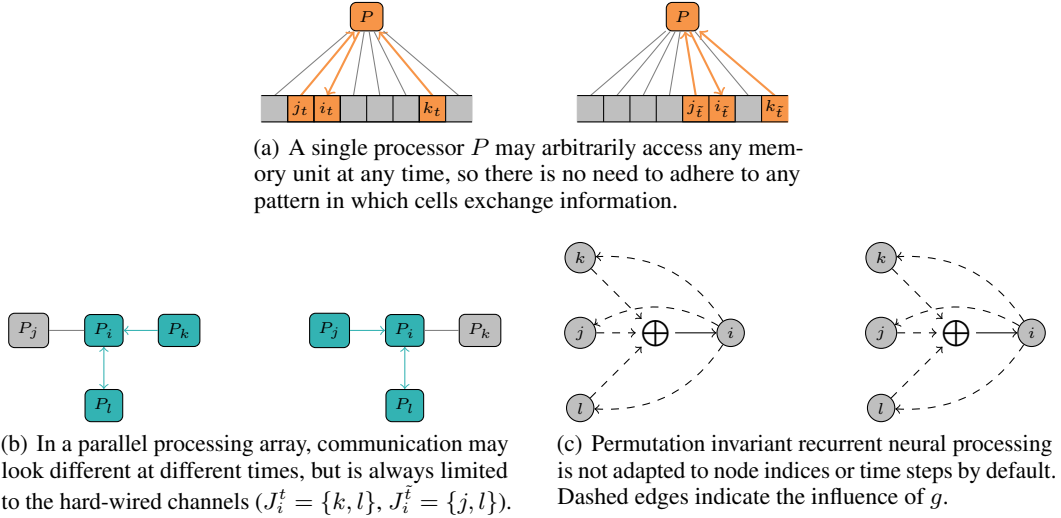
\begin{figure}
    \centering
    
    \subfigure[A single processor $P$ may arbitrarily access any memory unit at any time, so there is no need to adhere to any pattern in which cells exchange information.]{
        \begin{tikzpicture}[scale = 0.7]
            \label{subfig:1P}
            \node (P) [draw,rectangle,fill=cseq, rounded corners=0.5ex] at (2, 2) {\tiny \(P\)};
            \draw [fill=gray!50,draw=gray!50] (0,0) rectangle (0.5,0.5);
            \draw [fill=cseq] (0.5,0) rectangle (1,0.5);
            \draw [gray] (0.25,0.5) -- (P);
            \draw [->, cseq, thick] (0.75,0.5) -- (P);
            \draw [<-, cseq, thick] (1.25,0.5) -- (P);
            \draw [fill=cseq] (1,0) rectangle (1.5,0.5);
            \draw [gray] (1.75,0.5) -- (P);
            \draw [gray] (2.25,0.5) -- (P);
            \draw [gray] (2.75,0.5) -- (P);
            \draw [fill=gray!50] (1.5,0) rectangle (3,0.5);
            \draw [->, cseq, thick] (3.25,0.5) -- (P);
            \draw [fill=cseq] (3,0) rectangle (3.5,0.5);
            \draw [fill=gray!50,draw=gray!50] (3.5,0) rectangle (4,0.5);
            \draw [gray] (3.75,0.5) -- (P);

            \node at (1.25, 0.25){\tiny \(i_t\)};
            \node at (0.75, 0.25){\tiny \(j_t\)};
            \node at (3.25, 0.25){\tiny \(k_t\)};
            
            \draw (0,0) -- (4,0);
            \draw (0,0.5) -- (4,0.5);
            
            \foreach \x in {0.5,1,1.5,2,2.5,3,3.5}{
                \draw (\x,0) -- (\x,0.5);
            }


            \node (P) [draw,rectangle,fill=cseq, rounded corners=0.5ex] at (8, 2) {\tiny \(P\)};
            \draw [fill=gray!50,draw=gray!50] (6,0) rectangle (7,0.5);
            \draw [gray] (6.25,0.5) -- (P);
            \draw [gray] (6.75,0.5) -- (P);
            \draw [gray] (7.25,0.5) -- (P);
            \draw [fill=gray!50,draw=gray!50] (7,0) rectangle (7.5,0.5);
            \draw [gray] (7.75,0.5) -- (P);
            \draw [fill=cseq] (8,0) rectangle (9,0.5);
            \draw [->, cseq, thick] (8.25,0.5) -- (P);
            \draw [<-, cseq, thick] (8.75,0.5) -- (P);
            \draw [fill=gray!50,draw=gray!50] (7.5,0) rectangle (8,0.5);
            \draw [gray] (9.25,0.5) -- (P);
            \draw [fill=gray!50,draw=gray!50] (9,0) rectangle (9.5,0.5);
            \draw [fill=cseq, draw=cseq] (9.5,0) rectangle (10,0.5);
            \draw [->, cseq, thick] (9.75,0.5) -- (P);
            
            \foreach \x in {6.5,7,7.5,8,8.5,9,9.5}{
                \draw (\x,0) -- (\x,0.5);
            }

            \node at (8.75, 0.25){\tiny \(i_{\Tilde{t}}\)};
            \node at (8.25, 0.25){\tiny \(j_{\Tilde{t}}\)};
            \node at (9.75, 0.25){\tiny \(k_{\Tilde{t}}\)};
            
            \draw (6,0) -- (10,0);
            \draw (6,0.5) -- (10,0.5);
            
        \end{tikzpicture}
    }\hfill
    \subfigure[In a parallel processing array, communication may look different at different times, but is always limited to the hard-wired channels ($J_i^t = \{ k,l\}$, $J_i^{\Tilde{t}} = \{ j,l\}$).]{
    \begin{tikzpicture}[scale=0.6]
        \label{subfig:PP}
        \node (P) [draw,fill=cpar, rounded corners=0.5ex]{\tiny \(P_i\)};
        \node (j) [draw,left of=P, fill=gray!50, rounded corners=0.5ex] {\tiny \(P_j\)};
        \draw[gray] (j) --(P);
        
        \node (k) [draw,right of=P, fill=cpar, rounded corners=0.5ex] {\tiny \(P_k\)};
        \draw[->, cpar] (k) --(P);
        
        \node (l) [draw,below of=P, fill=cpar, rounded corners=0.5ex] {\tiny \(P_l\)};
        \draw[<->, cpar] (l) --(P);
        
        \node (space) [right of=k]{};
        \node (j) [draw,right of=space, fill=cpar, rounded corners=0.5ex] {\tiny \(P_j\)};
        \node (P2) [draw,right of=j, fill=cpar, rounded corners=0.5ex]{\tiny \(P_i\)};
        \draw[->, cpar] (j) --(P2);
        
        \node (k) [draw,right of=P2, fill=gray!50, rounded corners=0.5ex] {\tiny \(P_k\)};
        \draw[gray] (k) --(P2);
        
        \node (l) [draw,below of=P2, fill=cpar, rounded corners=0.5ex] {\tiny \(P_l\)};
        \draw[<->, cpar] (l) --(P2);
    \end{tikzpicture}
    }\hfill
    \subfigure[Permutation invariant recurrent neural processing is not adapted to node indices or time steps by default. Dashed edges indicate the influence of $g$.]{ 
        \begin{tikzpicture}[scale=0.6, Dot/.style={draw,circle,fill=gray!50,inner sep=2pt}] 
        \label{subfig:NN}
            \node (i) [Dot] {\tiny \(i\)};
            \node (agg) [black,left of=i, circle, inner sep=0, outer sep=0] {\( \bigoplus\)};
            \draw[->] (agg) -- (i);
            \node [left of=agg, Dot] (j) {\tiny \(j\)};
            \draw[->, dashed] (j) -- (agg);
            \node [above of=j, Dot] (k) {\tiny \(k\)};
            \draw[->, dashed] (k) -- (agg);
            \node [below of=j, Dot] (l) {\tiny \(l\)};
            \draw[->, dashed] (l) -- (agg);
            \draw [->, bend right, dashed] (i) edge (k);
            \draw [->, bend left, dashed] (i) edge (l);
            \draw [->, bend right, dashed] (i) edge (j);

            \node (space)[right of=i]{};
            \node [right of=space, Dot] (j) {\tiny \(j\)};
            \node (agg) [right of=j, circle, inner sep=0, outer sep=0] {\( \bigoplus\)};
            \node (i) [right of = agg, Dot] {\tiny \(i\)};
            \draw[->] (agg) -- (i);
            \draw[->, dashed] (j) -- (agg);
            \node [above of=j, Dot] (k) {\tiny \(k\)};
            \draw[->, dashed] (k) -- (agg);
            \node [below of=j, Dot] (l) {\tiny \(l\)};
            \draw[->, dashed] (l) -- (agg);
            \draw [->, bend right, dashed] (i) edge (k); 
            \draw [->, bend left, dashed] (i) edge (l);
            \draw [->, bend right, dashed] (i) edge (j);
        \end{tikzpicture}
    }
    \caption{Local view on information flow in different computational models at two different time steps $t$ and $\Tilde{t}$.}
    \label{fig:my_label}
\end{figure}
In neural networks, information exchange is inherently local. The feature $h_i^t$ of node $i$ at time $t$ may only depend on itself and its neighbours $\N_i$. E.g. for permutation invariant MPNN \cite{gilmer2017neural}, 
\begin{align}
    h_i^t = f \big(h_i^{t-1}, \underset{j \in \N_i}{\bigoplus} g(h_i^{t-1}, h_j^{t-1})\big)
    \label{eq:NN}
\end{align}
This paradigm is often not respected by classical algorithms, as depicted in figure \ref{subfig:1P}.
In the RAM model, the state $h_{i_t}^t$ of register $i_t$ updated at time $t$ may depend on any two registers $j_t$ and $k_t$:
\begin{align}
    h_{i_t}^t = f^t_i (h_{k_t}^{t-1}, h_{j_t}^{t-1}), \text{  $j_t, k_t$ arbitrary.} 
    \label{eq:1P}
\end{align}
Not being able to restrict which nodes have to communicate may render it advisable for a GNN to operate over a complete graph to make sure all necessary information is available at all times (see e.g. \cite{ibarz_generalist_2022}). The situation is different in the setting of interconnected processing arrays, see figure \ref{subfig:PP}. For example, OETS only ever requires neighbouring processors to compare their items. In general, at time $t$, the memory state $h_i^t$ of processor $i$ is computed by
\begin{align}
    h_i^t = f^t_i (h_i^{t-1}, \underset{j \in J_i^t}{\vert \vert} h_j^{t-1}),\ J_i^t \subseteq \N_i,
    \label{eq:PP}
\end{align}
where concatenation indicates how $i$ may tell apart its neighbours.
 Therefore it suffices for the GNN to only rely on edges present in the interconnection graph. To emulate a PRAM algorithm, an empty graph would in principle be enough, though it might not deem advantageous to route all communication over the graph feature in practice.
 Restricting the number of edges further reduces the use of resources and may help performance, since fewer unnecessary messages are being passed. Interconnection graphs are mostly chosen to be sparse, enabling maximum edge efficiency.

 \section{Methodology}
\label{sec:meth}
For our experiments, we use the CLRS framework for neural algorithmic reasoning \cite{velickovic_clrs_2022}. The default hidden size is 128, but we include experiments with smaller sizes in appendix \ref{sec:app}. The train data samples have input sizes 4, 7, 11, 13 and 16, while testing is done on input size $n=64$.\footnote{Earlier versions of this paper report performance when training only on samples of size 16.}

\subsection{The CLRS Framework}
CLRS follows the encode-process-decode paradigm. After encoding the input, a recurrent GNN denoted as \emph{processor} network carries out the main computation, until finally its output is routed through the decoder network. To help performance and distinguish between different algorithms solving the same task, not only the final output is evaluated, but also the intermediate states. The ground truths of these are referred to as \emph{hints}. For further details, we kindly refer the reader to \cite{velickovic_clrs_2022}. 

\subsection{Considered Algorithms}
To test the hypothesis, we consider the two elementary tasks of searching and sorting, as well as computing SCC as an example of a graph algorithm. The parallel algorithms are chosen from section \ref{sec:algo}; as sequential counterparts, we use binary search, bubble sort and Kosaraju's SSC algorithm from the CLRS-30 benchmark \cite{velickovic_clrs_2022}. Key data of the GNN we use are listed in table \ref{tab:key}.
We compare performances across various processor networks, namely the wide-spread architectures of DeepSets \cite{zaheer_deep_2017}, GAT \cite{velickovic_graph_2018}, MPNN \cite{gilmer2017neural}, and PGN \cite{velickovic_pointer_2020}. The trajectories of the new algorithms are encoded for the CLRS framework as follows below. Note that in every case, randomized positional information, as proposed by \citet{mahdavi_towards_2023} and standard on CLRS, is provided as part of the input, to emulate the situation of uniquely identified processors.
\begin{table}
\centering
\caption{Worst case asymptotic capacity $c$, node efficiency $\eta$ and edge efficiency $\epsilon$ in our experiments. }
    \begin{tabular}{lcccccc}
    \toprule
         &   \multicolumn{2}{c}{\textbf{Searching}}  &  \multicolumn{2}{c}{\textbf{Sorting}} & \multicolumn{2}{c}{\textbf{SCC}}  \\
          & Seq. & Par. & Seq. & Par. & Seq. & Par. \\
        \midrule
         $c$ & $n \log n$ & $n$ & $n^3$ & $n^2$ & $n(n+m)$ & $n^3$\\ 
         $\eta$ & $n^{-1}$ & 1 & $n^{-1}$ & 1 & $n^{-1}$ & $n^{-1}$\\
         $\epsilon$ & $n^{-2}$ & $1$ & $n^{-2}$ & $n^{-1}$ & $m^{-1}$ & $m^{-1}$\\
         \bottomrule
    \end{tabular}
    \vskip -0.1in
    \label{tab:key}
\end{table}

 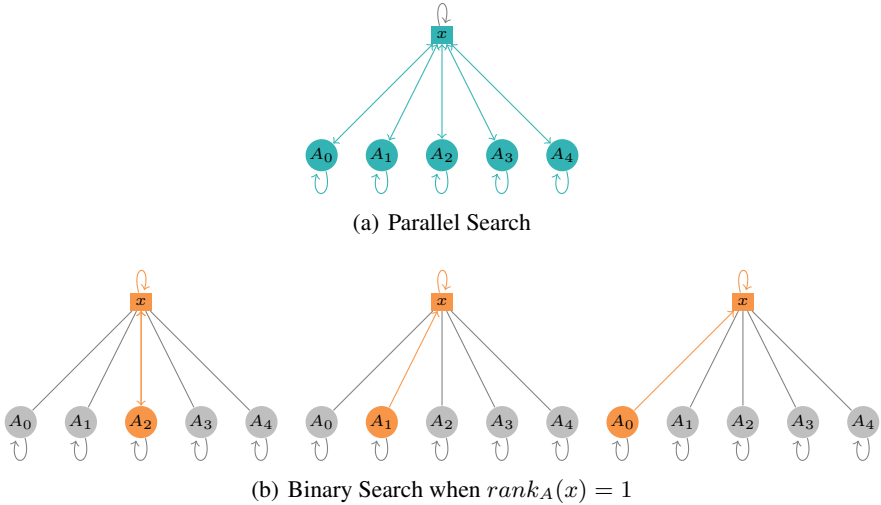
\begin{figure}
 \centering
\subfigure[Parallel Search]{
\begin{tikzpicture}[scale=0.8]
\tikzset{every node/.style={fill=cpar,minimum size=2pt, inner sep=0.5pt}}
\node (k) [inner sep=2pt] at (2,2){\tiny \(x\)}; 
\path (k) [loop above,gray] edge (k); 

\foreach \x in {0,...,4}
	\node [circle] (\x) at (\x,0){\tiny \( A_\x\)}; 

\foreach \x in {0,...,4}
	\draw [cpar, <->] (\x) -- (k);
	
\foreach \x in {0,...,4}
	\path (\x) [cpar, loop below] edge (\x);

\end{tikzpicture}
\label{subfig:par}
} \vfill
\subfigure[Binary Search when $rank_A(x) = 1$]{
\begin{tikzpicture}[scale=0.8]

 \node [fill=cseq, inner sep=2pt] (k0) at (2,2){\tiny \(x\)};
 \path (k0) [loop above,cseq] edge (k0); 
	\foreach \x in {2}{
		\node (0\x) [circle, fill=cseq, inner sep=0.5pt] at (\x, 0){\tiny \( A_\x\)};	
        \draw [cseq, <->] (k0) -- (0\x);
		\path (0\x) [loop below, gray] edge (0\x);
	} 
	\foreach \x in {0,1,3,4}{
		\node (0\x) [circle, fill=gray!50, inner sep=0.5pt] at (\x, 0){\tiny \( A_\x\)};	
        \draw [gray] (k0) -- (0\x);
        \path (0\x) [loop below, gray] edge (0\x);
        }
 
  \node [fill=cseq, inner sep=2pt] (k1) at (7,2){\tiny \(x\)};
  \path (k1) [loop above,cseq] edge (k1); 
	\foreach \x in {1}{
        \pgfmathtruncatemacro{\xn}{\x +5}
		\node (1\x) [circle, fill=cseq, inner sep=0.5pt] at (\xn, 0){\tiny \( A_\x\)};	
		\path (1\x) [loop below, gray] edge (1\x);
	} 
	\foreach \x in {0,2,3,4}{
        \pgfmathtruncatemacro{\xn}{\x +5}
		\node (1\x) [circle, fill=gray!50, inner sep=0.5pt] at (\xn, 0){\tiny \( A_\x\)};	
		\path (1\x) [loop below, gray] edge (1\x);
        \draw [gray] (k1) -- (1\x);}

  \node [fill=cseq, inner sep=2pt] (k2) at (12,2){\tiny \(x\)};
  \path (k2) [loop above,cseq] edge (k2); 
	\foreach \x in {0}{
        \pgfmathtruncatemacro{\xn}{\x +10}
		\node (2\x) [circle, fill=cseq, inner sep=0.5pt] at (\xn, 0){\tiny \( A_\x\)};	
		\path (2\x) [loop below, gray] edge (2\x);
	} 
	\foreach \x in {4,1,2,3}{
        \pgfmathtruncatemacro{\xn}{\x +10}
		\node (2\x) [circle, fill=gray!50, inner sep=0.5pt] at (\xn, 0){\tiny \( A_\x\)};	
		\path (2\x) [loop below, gray] edge (2\x);
        \draw [gray] (k2) -- (2\x);}

\draw [->, cseq] (02)--(k0); 
\draw [->, cseq] (20)--(k2); 
\draw [->, cseq] (11)--(k1); 
\end{tikzpicture}
\label{subfig:seq}
}

\caption{Necessary information flow when searching $x$ in $A = [A_0, \dots, A_4]$ using different algorithms. Active nodes and edges in color.}
\label{fig:updateStuc}
\end{figure}

\subsubsection{Searching}
\paragraph{Parallel Search.}

The hints for parallel search of $x$ in $A$ closely resemble its template. As to be seen in figure \ref{subfig:par}, each item $A_i$ of $A$ is represented by one node of an empty graph. A node \texttt{mask} indicates whether $A_i \leq x$. The position $rank_A (x)$ of $x$ in $A$ is predicted by the graph feature as a categorical variable over the nodes (\texttt{pointer} in \cite{velickovic_clrs_2022}). Therefore we introduce an extra node carrying $x$ as a placeholder to allow for as many categories as possible positions of $x$.

To perfectly predict the outcome in this setting, the graph nodes may be updated by
\begin{align*}
    h_i = ReLU (A_i -x),
\end{align*}
yielding $h_i = 0$ if and only if $A_i \leq x$.

So the graph feature may be computed by
\begin{align*}
    rank_A (x) = \min \{i=1,\dots,n : h_i = 0 \}. 
\end{align*}
These steps closely align with the considered neural update functions, especially since the function updating the graph level possesses its own set of parameters. Additionally, the roll-out has a constant length, leaving room for only a constant number of redundant edges, see figure \ref{subfig:par} and table \ref{tab:key}. Altogether, we expect high performance on parallel search.

 \paragraph{Binary Search.} Opposed to parallel search, binary search has an optimal complexity of $O(\log n)$. But given the need for $n$ nodes, it still requires an enhanced capacity of $O(n \log n)$, yielding low node efficiency. In CLRS-30, binary search is executed on a complete graph (whose edges are omitted in figure \ref{subfig:seq} to avoid clutter), impairing edge efficiency, see table \ref{tab:key}. Low efficiency is visible in figure \ref{subfig:seq} by the amount of grey components. 
 
\subsubsection{Sorting}
\paragraph{OETS}
Swapping the \texttt{scalar} items would require making numerical predictions. Instead, we predict changing predecessors as \texttt{pointers}, following preimplemented examples. To still provide edges between nodes holding items to compare, we have to operate on a complete graph, sacrificing edge efficiency (see table \ref{tab:key}), since only $\Theta(n)$ edges are active in each round, so $\epsilon = n/n^2$. As hints, we feed for each round the current predecessors along with an edge \texttt{mask} indicating whether two nodes have to switch their role, and a graph-level \texttt{mask} with the parity of the round, serving as a rudimentary clock.
\paragraph{Bubble Sort.}
Though Bubble Sort induces the same amount of operations $O(n^2)$ as OETS, it requires a larger network to be executed on (table \ref{tab:key}). Again, along with operating over a complete graph, this entails low efficiencies.

\subsubsection{Strongly Connected Components}
\paragraph{DCSC}
\label{sec:SCC}
We input the undirected adjacency matrix as edge \texttt{mask}, along with the directed one as \texttt{scalar}. Parallelizing the recursive calls of DCSC on multiple disjoint sets would require an extra feature dimension for every search that is going on. Therefore we only let the two BFS starting from the same source node be executed in parallel, which we each encode as is standard in CLRS-30. Additionally, a binary \texttt{mask} on each node is flipped to 1 as soon as it is discovered from both directions, indicating it belongs to the currently constructed SCC (this is reset at the start of every new search). At the same time, it receives a \texttt{pointer} to the source, which in the end constitutes the output. Throughout, we keep track of undiscovered nodes in another node \texttt{mask}. We choose the node with the smallest index from this set as the next source.

\begin{figure}
\centering
     \begin{tikzpicture}[scale=0.6, Dot/.style={circle,fill=cpar,inner sep=2pt,,minimum size=0.5cm}] 
            \node [left of = j]{};
            \node (i) [Dot] {\tiny \(\infty\)};
            \node (agg) [black,left of=i, circle, inner sep=0, outer sep=0] {min};
            \draw[->, cpar] (agg) -- (i);
            \node [left of=agg, Dot, inner sep=3pt] (j) {\small \(s\)};
            \draw[->, cpar] (j) -- (agg);
            \node [above of=j, Dot] (k) {\tiny \(\infty\)};
            \draw[->, cpar] (k) -- (agg);
            \node [below of=j, Dot] (l) {\tiny \(\infty\)};
            \draw[->, cpar] (l) -- (agg);
            \node [left of = j]{};

            \node (space)[right of=i]{};
            \node [right of=space, Dot, inner sep=3pt] (j) {\small \(s\)};
            \node (agg) [right of=j, circle, inner sep=0] {min};
            \node (i) [right of = agg, Dot, inner sep=3pt] {\small \(s\)};
            \draw[->, cpar] (agg) -- (i);
            \draw[->, cpar] (j) -- (agg);
            \node [above of=j, Dot] (k) {\tiny \(\infty\)};
            \draw[->, cpar] (k) -- (agg);
            \node [below of=j, Dot, inner sep=3pt] (l) {\small \(s\)};
            \draw[->, cpar] (l) -- (agg);
        \end{tikzpicture}
        \caption{Consecutive steps of passing the source node index $s$ during a BFS of DCSC. Note how repeating the computation would not change the state of the rightmost node, so redundant computations do not require to be parameterised differently.} 
        \label{fig:BFS}
\end{figure}
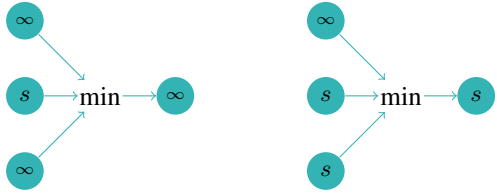

DCSC spends most of its time on the repeated BFS, a subroutine known to be learned well even on relatively simple architectures \cite{velickovic_neural_2020}, as it aligns well with neural execution \cite{dudzik_graph_2022}. 
Note how they let each node consider all its incoming edges in parallel, as is done on CLRS-30. This not only allows the trajectory to be shortened from $O(n+m)$ to $O(n)$ but also prevents redundant computations from having to be handled explicitly. 
Except for the source, each node can carry out the same computation at each step (see \cite{velickovic_neural_2020} for details) -- just that this will only change its state whenever information flowing from the start node reaches it. DCSC only has to pass the index $s$ of the source node instead of computing predecessor pointers, so computation looks like depicted in figure \ref{fig:BFS}, closely resembling the situation in figure \ref{subfig:NN}. Therefore, efficiency is expected to be less important for predictive performance in this special case. An obvious upper bound to DCSC's run time is $O(n^2)$, accounting for one (two-sided) BFS per node, resulting in the big capacity reported in table \ref{tab:key}. There is also no guarantee for more than one node and edge being active per step per BFS, resulting in low efficiencies. But this represents edge cases at best, such that the average trajectories will be much shorter and more efficient, as experiments will show. The core of DCSC aligning so well with neural execution promises good results.

\paragraph{Kosaraju.}
The skeleton of Kosaraju's algorithm as implemented in CLRS-30, on the other hand, is formed by a depth-first search (DFS), which is more challenging for neural executioners \cite{velickovic_clrs_2022}. As opposed to the closely related BFS, it is hard to parallelize. When relying on lexicographic ordering for tie-braking, it is considered an \emph{inherently sequential} algorithm \cite{reif_depth-first_1985}. Since nodes have to wait for the search to retract from its siblings, the computation cannot be carried out as in figure \ref{fig:BFS}, so processing needs to be timed correctly. The total run time is $O(n+m)$, entailing the capacity and efficiencies reported in table \ref{tab:key}.

\section{Results}
\label{sec:exp}

\begin{table}
	\centering
	\caption{Out-of-distribution micro-F1 scores after 2000 iterations of training sequential versus parallel algorithms on different processor networks, averaged over 3 seeds.}
    \label{tab:results}
    \begin{small}
    \begin{tabular}{lcccccc}
    \toprule
         &   \multicolumn{2}{c}{\textbf{Searching}}  &  \multicolumn{2}{c}{\textbf{Sorting}} & \multicolumn{2}{c}{\textbf{SCC}}  \\
         Arch. & Sequential & Parallel & Sequential & Parallel & Sequential & Parallel \\
        \midrule
         DeepSets & 67.2\%$\pm$10.2 &   \textbf{100\%$\pm$ 0.0}  &   57.5\%$\pm$4.5 &  \textbf{77.8\%$\pm$5.3} & 26.2\%$\pm$8.7 &  \textbf{41.1\%$\pm$14.4} \\
         GAT & 3.4\%$\pm$1.2 &  \textbf{100\%$\pm$0.0} &  28.6\%$\pm$ 13.6 & \textbf{34.3\%$\pm$20.0} & 28.9\%$\pm$2.5&  \textbf{76.6\%$\pm$4.9} \\
         MPNN & 79.8\%$\pm$5.8  &  \textbf{100\%$\pm$ 0.0} &  34.5 \%$\pm$9.3 & \textbf{52.4\%$\pm$23.4}  & 34.0\%$\pm$1.1 &  \textbf{75.0\%$\pm$6.2} \\
         PGN & 77.2\%$\pm$9.2 &  \textbf{100\%$\pm$ 0.0}  &  50.9 \%$\pm$17.8  & \textbf{62.0\%$\pm$25.9} & 35.1\%$\pm$3.1 &  \textbf{82.3\%$\pm$2.2} \\
         \bottomrule
    \end{tabular}
    \end{small}
\end{table}

\begin{figure}
    \centering
    \includegraphics[scale=0.4]{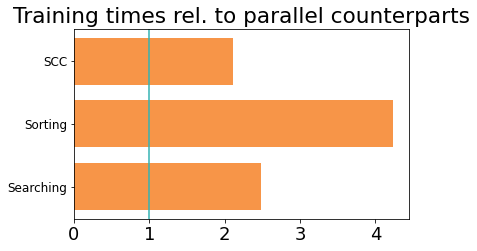}
    \caption{Training times of sequential algorithms with samples of input size $n=16$, relative to their respective parallel counterparts.}
    \label{fig:time}
\end{figure}

\begin{figure}
    \centering
    \subfigure[Searching]{\includegraphics[scale=0.3]{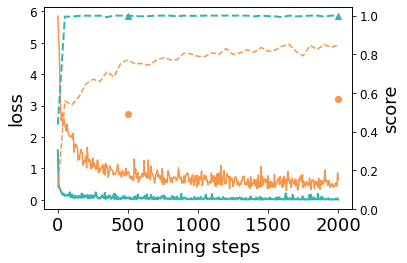}}
    \subfigure[Sorting]{\includegraphics[scale=0.3]{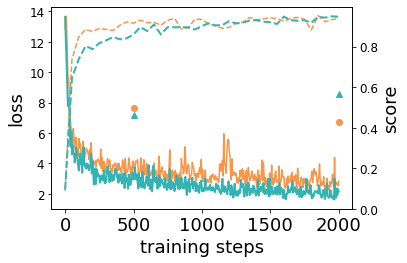}}
    \subfigure[SCC]{\includegraphics[scale=0.3]{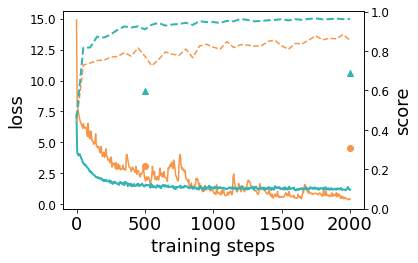}}
    \caption{Losses (solid lines) and validation scores (dashed lines) over time on different tasks. Performance on sequential algorithms in orange, on parallel ones slightly thicker in turquoise. Test scores after 500 and 2000 steps as orange points and turquoise triangles for sequential and parallel algorithms, respectively.}
    \label{fig:valloss}
\end{figure}

Predictive performance is reported in table \ref{tab:results}. Parallel search achieves perfect results and converges very quickly (see figure \ref{fig:valloss}). Meanwhile, training time on sample size 16 is reduced by a factor of more than 2 as compared to binary search (see figure \ref{fig:time}). Parallel sorting outperforms its sequential opponent as well, though performance on both is subject to big standard deviations. Despite both algorithms requiring the same asymptotic number of operations, training OETS takes less than a quarter of the time needed for bubble sort (figure ~\ref{fig:time}. Despite DCSC's only partial parallelization and the asymptotically optimal linear run time of its sequential opponent, training time is more than halved for the SCC task as well. At the same time, predictions become up to more than twice as accurate.

\section{Discussion}
\label{sec:diss}
Neural efficiency only loosely correlates with predictive performance when comparing tables \ref{tab:key} and \ref{tab:results}. This is not too surprising, since correctly parameterising redundant computations is only one of many aspects that make a function hard to learn. We propose a rather one-sided relationship, where low efficiencies can harm accuracy (if not circumvented as in BFS, see section \ref{sec:SCC}), but high efficiencies do not necessarily enhance learning success. 

We would like to highlight the importance of taking the perspective on neural networks as computational models when executing algorithms, as it opens access to the rich theory of computational complexity. E.g. the classes of NC (efficiently parallelizable) and P-complete problems (mostly thought of as inherently sequential) \cite{greenlaw_limits_1995} inform us on which tasks may be hard to execute neurally, to tackle them more effectively. However, in doing so, it is important to keep in mind the gap between the respective sets of constant time operations, with none being strictly more powerful than the other. On the one hand, a single RAM instruction may need to be approximated by entire subnetworks. On the other hand, one neural step suffices to process all incoming edges of a node during the execution of BFS \cite{velickovic_neural_2020}. This breaks up the strict correspondence between time-processor product and capacity.

\section{Conclusion}
As suggested in section \ref{sec:theory}, parallel algorithms prove to be a lot more efficient to learn and execute on neural architectures than sequential ones. Often, OOD predictions on algorithmic tasks are significantly improved as well, suggesting that higher node and edge efficiency can help learning. Future work has to show how performance is impacted for other tasks, on more elaborate architectures like in \cite{bevilacqua_neural_2023, ibarz_generalist_2022}, and in generalist settings.

\section*{Author Contributions}
\begin{tabular}{ll}
  Valerie Engelmayer:   &  Conceptualization, Formal Analysis, Investigation, Methodology, Software,\\ & Visualization, Writing – original draft \\
  Dobrik Georgiev:    & Resources, Validation\\
  Petar Veličković: & Supervision, Writing – review \& editing
\end{tabular}

\section*{Acknowledgements}
We would like to thank Razvan Pascanu and Karl Tuyls for their valuable comments, as well as Pietro Liò for insightful discussions and Torben Hagerup for the support he provided. 

\bibliographystyle{unsrtnat}

\bibliography{GNN}

\appendix
\section{Appendix}

\label{sec:app}
Better alignment of parallel algorithms may enhance performance on smaller processor networks. Indeed, we observe that decreasing the hidden size from 128 to 32 or even 8 is mostly slightly less impeding in the parallel setting, especially the searching task, see tables \ref{tab:results8} and \ref{tab:results32}, along with figure \ref{fig:ValSmall}.

\begin{table}
	\centering
	\caption{Out-of-distribution micro-F1 scores after 2000 iterations of training sequential versus parallel algorithms on different processor networks with hidden size 8, averaged over 3 seeds.}
    \label{tab:results8}
    \begin{small}
    \begin{tabular}{lcccccc}
    \toprule
         &   \multicolumn{2}{c}{\textbf{Searching}}  &  \multicolumn{2}{c}{\textbf{Sorting}} & \multicolumn{2}{c}{\textbf{SCC}}  \\
         Arch. & Sequential & Parallel & Sequential & Parallel & Sequential & Parallel \\
        \midrule
         DeepSets & 40.8\%$\pm$9.3 &   \textbf{97.9\%$\pm$ 2.9}  &   18.8\%$\pm$7.2 &  \textbf{43.4\%$\pm$6.8} & 30.5\%$\pm$2.5 &  \textbf{43.4\%$\pm$7.2} \\
         GAT & 4.0\%$\pm$0.7 &  \textbf{100\%$\pm$0.0} &  29.9\%$\pm$ 11.1 & \textbf{36.0 \%$\pm$16.3} & 33.5\%$\pm$0.7 &  \textbf{48.2\%$\pm$2.8} \\
         MPNN & 31.9\%$\pm$13.6  &  \textbf{99.0\%$\pm$ 1.5} &  \textbf{40.8 \%$\pm$24.5} & 30.6\%$\pm$18.5  & 29.2\%$\pm$4.3 &  \textbf{46.2\%$\pm$2.2} \\
         PGN & 36.4\%$\pm$20.0 &  \textbf{98.0\%$\pm$ 2.9}  &  41.4 \%$\pm$25.8  & \textbf{51.0\%$\pm$15.6} & 30.7\%$\pm$3.1 &  \textbf{45.9\%$\pm$3.1} \\
         \bottomrule
    \end{tabular}
    \end{small}
\end{table}

\begin{table}
	\centering
	\caption{Out-of-distribution micro-F1 scores after 2000 iterations of training sequential versus parallel algorithms on different processor networks with hidden size 32, averaged over 3 seeds.}
    \label{tab:results32}
    \begin{small}
    \begin{tabular}{lcccccc}
    \toprule
         &   \multicolumn{2}{c}{\textbf{Searching}}  &  \multicolumn{2}{c}{\textbf{Sorting}} & \multicolumn{2}{c}{\textbf{SCC}}  \\
         Arch. & Sequential & Parallel & Sequential & Parallel & Sequential & Parallel \\
        \midrule
         DeepSets & 65.1\%$\pm$7.5 &   \textbf{100\%$\pm$ 0.0}  &  37.0\%$\pm$3.2 &  \textbf{64.3\%$\pm$2.1} & 15.2\%$\pm$4.9 &  \textbf{45.0\%$\pm$5.5} \\
         GAT & 7.5\%$\pm$1.6 &  \textbf{100\%$\pm$0.0} &  43.6\%$\pm$ 9.0 & \textbf{64.6\%$\pm$7.2} & 14.8\%$\pm$12.0&  \textbf{51.1\%$\pm$6.1} \\
         MPNN & 63.1\%$\pm$10.9  &  \textbf{100\%$\pm$ 0.0} &  57.6 \%$\pm$4.8 & \textbf{60.2\%$\pm$9.4}  & 26.2\%$\pm$4.2 &  \textbf{62.7\%$\pm$6.9} \\
         PGN & 77.1\%$\pm$2.5 &  \textbf{100\%$\pm$ 0.0}  &  44.0 \%$\pm$8.3  & \textbf{74.1\%$\pm$5.6} & 21.4\%$\pm$4.3 &  \textbf{66.9\%$\pm$5.9} \\
         \bottomrule
    \end{tabular}
    \end{small}
\end{table}

\begin{figure}
    \centering
    \subfigure[Searching with hidden size 8]{\includegraphics[scale=0.4]{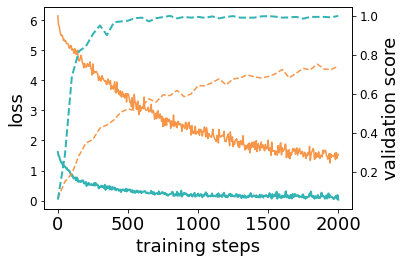}}
    \subfigure[Searching with hidden size 32]{\includegraphics[scale=0.4]{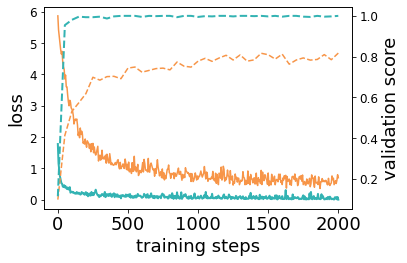}}
    \subfigure[Sorting with hidden size 8]{\includegraphics[scale=0.4]{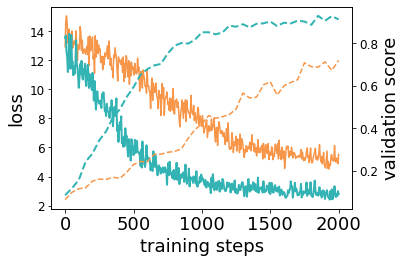}}
    \subfigure[Sorting with hidden size 32]{\includegraphics[scale=0.4]{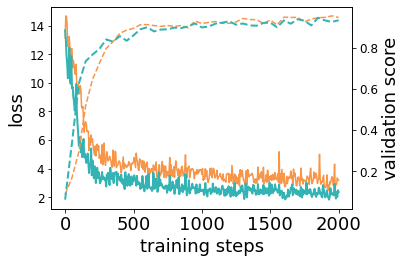}}
    \subfigure[SCC with hidden size 8]{\includegraphics[scale=0.4]{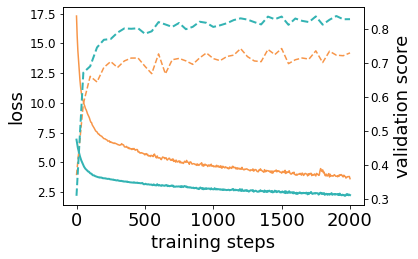}}
    \subfigure[SCC with hidden size 32]{\includegraphics[scale=0.4]{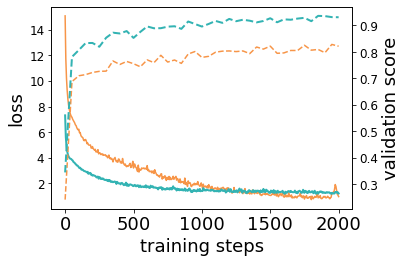}}
    \caption{Losses (solid lines) and validation scores (dashed lines) over time on different tasks with different hidden sizes. Performance on sequential algorithms in orange, on parallel ones slightly thicker in turquoise.}
    \label{fig:ValSmall}
\end{figure}

\end{document}